\DeclareTextSymbolDefault{\ohorn}{T5}
\DeclareTextSymbolDefault{\uhorn}{T5}
\crefname{section}{\S}{\S\S}
\Crefname{section}{\S}{\S\S}
\crefname{table}{Tab.}{}
\crefname{figure}{Fig.}{}
\crefname{algorithm}{Algorithm}{}
\crefname{equation}{eq.}{eqs.}
\crefname{appendix}{App.}{}
\crefname{thm}{Theorem}{Theorems}
\crefname{tcb@cnt@theorem}{Theorem}{Theorems}
\crefname{prop}{Proposition}{Propositions}
\crefname{tcb@cnt@proposition}{Proposition}{Propositions}
\crefname{cor}{Corollary}{Corollaries}
\crefname{tcb@cnt@corollary}{Corollary}{Corollaries}
\crefname{observation}{Observation}{Observations}
\crefname{assumption}{Assumption}{Assumptions}
\crefname{theorem}{Thm.}{Thms.}
\crefname{corollary}{Cor.}{Cors.}
\crefname{proposition}{Prop.}{Props.}
\definecolor{brandblue}{rgb}{0.34, 0.7, 1}
\newcommand*\iftodonotes{\if@todonotes@disabled\expandafter\@secondoftwo\else\expandafter\@firstoftwo\fi}  %
\definecolor{dandelion}{HTML}{FFD464}
\definecolor{limegreen}{HTML}{32CD32}
\DeclareMathOperator{\expect}{\mathbb{E}}
\newtheorem{theorem}{Theorem}
\newtheorem{corollary}[theorem]{Corollary}
\theoremstyle{remark}
\newtheorem*{remark}{Remark}
\newcommand{\kron}{\otimes}
\newcommand{\R}{\mathbb{R}}
\def\calL{{\mathcal{L}}}
\def\vzero{{\mathbf{0}}}
\def\vmu{{\boldsymbol{\mu}\xspace}}
\def\vnu{{\boldsymbol{\nu}\xspace}}
\newcommand{\Matrix}[1]{\mathbf{#1}}
\def\mA{{\Matrix{A}}}
\def\mB{{\Matrix{B}}}
\def\mC{{\Matrix{C}}}
\def\mI{{\Matrix{I}}}
\def\mL{{\Matrix{L}}}
\def\mM{{\Matrix{M}}}
\def\mN{{\Matrix{N}}}
\def\mO{{\Matrix{O}}}
\def\mP{{\Matrix{P}}}
\def\mR{{\Matrix{R}}}
\def\mS{{\Matrix{S}}}
\def\mW{{\Matrix{W}}}
\def\mX{{\Matrix{X}}}
\def\mY{{\Matrix{Y}}}
\renewcommand{\cot}[1]{\overline{#1}}
\newcommand{\range}[1]{\operatorname{range} #1}
\newcommand{\vectext}{\operatorname{vec}}
\renewcommand{\vec}[1]{\vectext\left(#1\right)}
\newcommand{\vecinvtext}{\operatorname{vec}^{-1}}
\newcommand{\vecinv}[1]{\vecinvtext\left(#1\right)}
\newcommand{\tinyB}{200M\xspace}
\newcommand{\largeB}{1.3B\xspace}
\newcommand{\optimStateGeneral}{\xi}
\newcommand{\updateGeneral}{\Delta}
\newcommand{\optimState}[2]{\optimStateGeneral_{#1}^{(#2)}}
\newcommand{\update}[2]{\updateGeneral_{#1}^{(#2)}}
\newcommand{\adj}{\top}
\newcommand{\optimizer}{\operatorname{Optimizer}}
\newcommand{\linearMap}[1]{{\mathbf{#1}}}
\newcommand{\Loss}{\calL}
\DeclareSIUnit\gigabyte{GB}
\newcommand{\gb}[1]{\num[round-mode=places, round-precision=2]{\fpeval{#1/1e9}}}
\newcommand{\vcram}[1]{}
\icmltitlerunning{On the Duality between  Gradient Transformations and  Adapters
}
\begin{document}

\twocolumn[
\icmltitle{}

\icmlsetsymbol{equal}{*}

\begin{icmlauthorlist}
\icmlauthor{Lucas Torroba-Hennigen}{mit}
\icmlauthor{Hunter Lang}{mit}
\icmlauthor{Han Guo}{mit}
\icmlauthor{Yoon Kim}{mit}
\end{icmlauthorlist}

\icmlaffiliation{mit}{Massachusetts Institute of Technology}

\icmlcorrespondingauthor{Lucas Torroba-Hennigen}{lucastor@mit.edu}

\icmlkeywords{Machine Learning, ICML}

\vskip 0.3in
]

\printAffiliationsAndNotice{}  %

\begin{abstract}
We study memory-efficient optimization of neural networks (in particular language models) with \emph{linear gradient transformations}, where the gradients are  linearly mapped to a lower dimensional space  than the full parameter space, thus saving memory required for gradient accumulation and optimizer state persistence. The model parameters are updated by first performing an optimization step in the lower dimensional space and then going back into the original parameter space via the linear map's transpose. We show that optimizing the model in this transformed space is equivalent to reparameterizing the original model through a \emph{linear adapter} that additively modifies the model parameters, and then only optimizing the adapter's parameters. When the transformation is Kronecker-factored,  this establishes an equivalence between GaLore~\citep{galore} and one-sided LoRA~\citep{lora}. We show that this duality between gradient transformations and adapter-based reparameterizations unifies existing approaches to memory-efficient training and suggests new techniques for improving training efficiency and memory use.
\vspace{-2mm}

\end{abstract}

\section{Introduction}

Training neural networks, in particular large language models (LLMs), can be extremely memory-intensive.
Standard approaches for LLM training use gradient accumulation across multiple batches and optimizers such as Adam~\citep{adam}, which maintains estimates of the first and second moments of the (stochastic) gradient. 
Hence, the amount of GPU memory needed for standard training can be as much as four times the amount of memory needed to store the model (assuming  the gradients/optimization states are kept in the same precision as the model parameters).

In response, a wealth of literature has developed around  memory-efficient training methods.
Most of these fall into one of two families.
The first involves modifications to the model parameterization, in particular by introducing ``adapters'' to the model architecture that have a small number of additional parameters, and only tuning those adapters \citep{houlsby2019parameter, li-liang-2021-prefix,lora}.  
Adapters such as LoRA increase the total number of  parameters but reduce the number of trainable parameters, resulting in overall memory savings.
While LoRA was originally introduced for memory-efficient \emph{finetuning}, recent works such as ReLoRA~\citep{relora}, LoRA-the-Explorer~\citep[LTE;][]{lte}, and Flora~\citep{hao2024flora} find that LoRA can even enable memory-efficient \emph{pretraining} if the adapters are periodically merged back into the full model (and then reinitialized).

The second family of methods involves more direct changes to the optimization strategy, either by designing optimizers that store fewer extra bits of information per parameter~\citep{anil2019memory, adafactor}, or (broadly) compressing the gradients, e.g., via quantization~\citep{signsgd, adam8bit,li2024memory} or low-rank approximations~\citep{gooneratne2020low,huang2023low}. 
For LLMs, GaLore~\citep{galore} has recently emerged as a promising gradient compression approach for memory-efficient pretraining. GaLore transforms the gradient matrices of linear layers via projections derived from an SVD of the gradient matrix, and then performs optimization in this projected space. 

Is there a relationship between methods that directly transform/compress gradients, and adapter-based methods that reparameterize the underlying model into frozen and trainable components? In the case of GaLore and LoRA, recent works find that the answer is \emph{yes}, in particular showing that training a LoRA adapter with one side frozen can be seen  as a form of gradient compression where the gradient matrices are {sketched} to a lower dimensional space with random matrices~\citep{hao2024flora} or through SVD-based projections~\citep{loeschckeloqt}.

In this work, we show that the  connection between GaLore and  LoRA is more general by proving that training a neural network by applying an \emph{arbitrary linear transformation} to the gradient vector is equivalent (in the sense that the resulting models are the same and have the same optimization trajectory) to reparameterizing the neural network through a \emph{linear adapter} that additively modifies the original parameters, and then only training the adapter. 
When applied to (vectorized) matrices with a particular Kronecker factorization of the linear map, our results recover the equivalence between GaLore and one-sided LoRA.

Our empirical experiments study this  connection between linear gradient transformations and adapter-based reparameterizations  in the context of memory-efficient LLM training.
First, we perform a comparison across gradient projection-based and LoRA-based approaches for memory-efficient training and find that randomly sketching gradients works particularly well (\cref{sec:applications-galore-to-lora}). 
We  also exploit the adapter view of  projected-gradient training\footnote{Note that this notion of performing gradient descent with projections of the gradient is distinct from \emph{projected gradient descent} (PGD) from the optimization literature.} by developing a QLoRA-style \citep{qlora} approach to GaLore-style training. Second, we show that the gradient projection view of LoRA adapters can improve  distributed training of LLMs with parallel LoRA adapters \citep{lte} by suggesting an initialization scheme of worker-specific LoRA adapters  tailored for distributed training (\cref{sec:applications-lora-to-galore}). These results collectively demonstrate that this \textit{duality} between linear gradient transformations and 
adapter-based reparameterizations is a productive lens with which to view neural network
optimization, since it unifies several existing approaches and suggests new techniques for improving training efficiency and performance.

\section{Background}

\subsection{Memory Characteristics of LLM Training}

\label{sec:memory-characteristics}

LLM training makes use of accelerators like GPUs, which requires  storing important data in rapidly accessible, on-device memory.\footnote{While offloading to CPU is theoretically possible, bandwidth limitations often make this infeasible in practice.}
The bulk of this memory consumption can be broken down into four main categories.

\noindent\textbf{Model parameters.} 
We must keep the model's parameters in memory, since these are used in various stages of the training process (e.g., to compute gradients). 
Here, it is useful to distinguish \emph{trainable} parameters (which get updated regularly during training) from \emph{non-trainable} parameters (which are not updated during training but may still be used in gradient computation).
    
\noindent\textbf{Gradients.} 
LLMs are trained using (variants of) stochastic gradient descent, which requires an estimate of the gradient of the loss function with respect to each trainable parameter. 
Standard LLM training uses a large number of samples to estimate the gradient, which necessitates gradient accumulation across multiple mini-batches of data.\footnote{While there are methods that perform an optimizer step as soon as a gradient is estimated (thus eliminating the need to allocate memory for gradient accumulation; \citealp[e,g., LOMO,][]{lomo}), this is not standard in LLM training since it can place restrictions on sequence length: for example GaLore with LOMO only trains on 256-length sequences. We thus train with gradient accumulation in the present work.}

\noindent\textbf{Optimizer states.}
In addition to the gradient itself, most optimizers used in LLM training persist other state across steps. Adam~\citep{adam} and AdamW~\citep{adamw} maintain running averages of the gradient and the gradient squared (i.e., an estimate of first- and second-order  moments), which require two floats per trainable parameter. 
Examples of techniques that reduce optimizer memory include 8-bit Adam~\citep{adam8bit}, which stores Adam states in lower precision, and AdaFactor~\citep{adafactor}, which modifies Adam to use fewer floats per parameter.
    
\noindent\textbf{Activations.}
LLM gradients are almost always obtained using reverse-mode automatic differentiation~\citep{griewank}. 
This consists of building a description of the LLM during a forward pass, in terms of a computation graph of its operations, and storing all (possibly intermediate) results required to subsequently compute the gradients of the neural network. 
The simplest way to reduce activation memory is by breaking batches into smaller microbatches and performing more gradient accumulation steps. 
Other techniques include gradient checkpointing~\citep{chen2016training,checkmate}, which trades off compute for activation memory by recomputing quantities during the backward pass, and random projections~\citep{bershatsky,wta-rcs}, which produce stochastic estimators of gradients based on sketched activations.

This work is mostly concerned with training LLMs in memory-constrained regimes where the model, optimizer, and gradient memory dominate, since activation storage can be made small by, e.g., reducing the microbatch size.
As such, we will only focus on those categories in our calculations.

\vcram{-1mm}
\subsection{LoRA and GaLore}
\vcram{-1mm}
This paper centers mainly around two memory-efficient training techniques: low-rank adapters~\citep[LoRA;][]{lora} and gradient low-rank projections~\citep[GaLore;][]{galore}.
LoRA reparameterizes the model's linear layers as $\mY = (\mW + \mA\mB)\mX$, where $\mW$ is the model's original weight matrix and $\mA,\mB$ are  matrices such that $\operatorname{rank}(\mA\mB) < \operatorname{rank}(\mW)$. 
$\mW$ remains frozen and only $\mA,\mB$ are optimized; thus, while the {total} number of model parameters is {increased}, the number of {trainable parameters} is {decreased}, which can lead to memory savings. Recent works obtain even further memory savings by working with a compressed version of $\mW$~\citep{qlora,guo2024lq,li2023loftq}.
While LoRA was originally proposed in the context of memory-efficient {finetuning}, ReLoRA~\citep{relora}, LoRA-the-Explorer~\citep{lte}, and Flora~\citep{hao2024flora} show that by periodically merging the low-rank components with the full weights and reinitializing them, LoRA can enable reasonably performant memory-efficient {pretraining} from scratch. 

GaLore provides an alternative approach to memory-efficient pretraining. 
Instead of reparameterizing the {weights} to be a combination of full-rank and low-rank matrices---which increases the number of model parameters---GaLore performs a low-rank compression  of the \emph{gradient} \emph{matrix}  $\cot{\mW}$ instead.
The optimizer update is computed in this lower dimensional space, transformed back to the original space, and only then applied to the parameters.
Specifically, given a gradient matrix $\cot{\mW} \in \R^{m \times n}$, GaLore uses a matrix $\mP \in \R^{k \times m}$ (with $k < m$) to transform the gradient via $\mP \cot{\mW} \in \R^{k \times n}$, feeds this compressed gradient into a regular optimizer to obtain a pseudo-parameter update $\Delta \in \R^{k \times n}$, and then updates the original parameters via  $\mP^\adj {\Delta}$.
In practice, $\mP$  is given by the top singular vectors of $\cot{\mW}$, where
in order to amortize the cost of SVD, $\mP$ is updated only every so often.
As with LoRA, GaLore reduces the memory needed to store the optimizer states, since optimization happens in the lower dimensional space.

\vcram{-2mm}
\section{Duality between Linear Gradient Transformations and Adapters}
\vcram{-2mm}
\label{sec:theory}

In this section, we prove that training a neural network using  linear transformations of the gradient is equivalent to reparameterizing the neural network using specific linear adapters.
We  begin with the general case, where all parameters are treated as arbitrary vectors~(\cref{thm:grad-proj-is-adapter}).
We then show how applying a Kronecker-factored linear transformation to the gradients of linear layers of the network is equivalent to training the model with a version of LoRA which inserts a trainable matrix between the LoRA matrices (\cref{thm:kron-factored-proj-is-mora}).
From this, we further show that specializing to a specific choice of Kronecker-factored transformation establishes an equivalence between GaLore~\citep{galore} and ``one-sided'' LoRA~\citep{lora} where one of the LoRA matrices is initialized in a particular way and kept frozen, while only the other one is trained (\cref{thm:galore-is-lora}); this recovers the equivalence established in recent work \citep{hao2024flora,loeschckeloqt}.

\vcram{-1mm}
\subsection{General Case}
\vcram{-1mm}
Let $f(\mX; \Theta)$ be a neural network over input $\mX$ with trainable parameters $\Theta \in \mathbb{R}^{d}$, and further let $\cot \Theta^{} \in \mathbb{R}^{d}$ be the gradient of some differentiable loss function $\Loss$ of the network $f$ with respect to $\Theta$, computed on a random data minibatch.
We use the superscript $\left(\cdot\right)^{(t)}$ to specify a particular quantity's value after $t$ optimizer steps, e.g.,  $\Theta^{(t)}$ are the network's parameters after $t$ optimizer steps.
We are also interested in the \textit{optimization trajectory} of a model   $(\Theta^{(0)}, \Theta^{(1)}, \ldots, \Theta^{(t)})$.
Our  results show that two optimization trajectories---one from training with linear gradient transformations, and one from training with linear adapters---are equivalent.

Typical approaches to neural network optimization use optimizers that maintain a state $\optimState{\Theta}{t}$ and obtain $\Theta^{(t+1)}$ via,
\setlength{\abovedisplayskip}{4pt}
\setlength{\belowdisplayskip}{4pt}
\begin{align}
   (\update{\Theta}{t}, \optimState{\Theta}{t + 1}) &= \optimizer( \cot{\Theta}^{(t)}, \optimState{\Theta}{t}) \\
   \Theta^{(t+1)} &= \Theta^{(t)} + \update{\Theta}{t}.
    \label{eq:optim-definition}
\end{align}
For example, Adam\footnote{We omit bias correction for simplicity; one can easily handle it by adding the current timestep to our optimizer state. This change would also allow us to add learning rate schedules.}~\citep{adam} maintains first- and second-moment estimates of the gradient entries in its state $\optimState{\Theta}{t} = (\vmu_\Theta^{(t)}, \vnu_\Theta^{(t)})$, and the optimizer update is:
\setlength{\abovedisplayskip}{\baselineskip}
\setlength{\belowdisplayskip}{\baselineskip}
\[
\begin{array}{r@{\;}l}
    \text{\footnotesize \( \mu_{\Theta, i}^{(t + 1)} \)}&\text{\footnotesize \(= (1 - \beta_1) \cot{\Theta}_i^{(t)} + \beta_1 \mu_{\Theta, i}^{(t)} \) }\\[1pt]
    \text{\footnotesize \( \nu_{\Theta, i}^{(t + 1)} \)}& \text{\footnotesize \(= (1 - \beta_2) (\cot{\Theta}_i^{(t)})^2 + \beta_2 \nu_{\Theta, i}^{(t)} \) }
\end{array}
\;
\vcenter{\hbox{\( \text{\footnotesize \( \update{\Theta, i}{t} = - \gamma \frac{\mu_{\Theta, i}^{(t + 1)}}{\sqrt{\nu_{\Theta, i}^{(t + 1)}} + \epsilon} \) } \)}}
\]
where $\gamma \in \R^+$ is the learning rate, $\beta_1, \beta_2 \in [0, 1)$ control the exponential moving averages of the gradient moments, and $\epsilon$ is present for numerical stability.
In this case, the dimensionality of the optimizer states is proportional to the dimensionality of our gradient estimate $\dim(\cot{\Theta}^{(t)}) = d$, i.e., 
$\update{\Theta}{t + 1} \in \R^{d}, \optimState{\Theta}{t} \in \R^{2d}$.

\setlength{\abovedisplayskip}{4pt}
\setlength{\belowdisplayskip}{4pt}
Now consider optimizing $\Theta$ with \emph{linearly transformed gradient dynamics}, where the gradient $\cot \Theta$ is mapped to an $r$-dimensional space by a matrix $\mS \in \R^{r \times d}$. 
In this case, we can use the transpose of the linear map to go back into the original parameter space resulting in the following update:
\begin{align*}
   (\update{\linearMap{S}\Theta}{t}, \optimState{\linearMap{S}\Theta}{t+1}) &= \optimizer(\linearMap{S}\cot{\Theta}^{(t)}, \optimState{\linearMap{S}\Theta}{t}) \\
    \Theta^{(t+1)} &= \Theta^{(t)} + \linearMap{S}^{\adj}\update{\linearMap{S}\Theta}{t},
\end{align*}
where we have used the subscript $\linearMap{S}\Theta$ to emphasize the fact that the optimizer is now operating on a different space, i.e., as if we were optimizing on $\R^r$, instead of the original parameter space, $\R^d$.
For example, if we were using Adam as our optimizer, then this change would cause the dimensionality of the optimizer update and states to be proportional to $r$ instead of $d$, viz., $\update{\linearMap{S}\Theta}{t + 1} \in \R^{r}, \optimState{\linearMap{S}\Theta}{t} \in \R^{2r}$.

Let us further consider a \emph{reparameterization} of the neural network parameters as  $f(\mX; \Theta^{} + \linearMap{S}^\adj \Lambda)$ with $\Lambda \in \mathbb{R}^{r}$. Specifically, suppose that we keep $\Theta^{}$ and $\mathbf{S}$ fixed, and only optimize $\Lambda$, resulting in the following update:
\begin{subequations}
\label{eq:reparameterized-optim-definition}
\begin{align}
   (\update{\Lambda}{t}, \optimState{\Lambda}{t+1}) &= \optimizer( \cot{\Lambda}^{(t)}, \optimState{\Lambda}{t}) \\
    \Lambda^{(t+1)} &= \Lambda^{(t)} + \update{\Lambda}{t}.
\end{align}
\end{subequations}
Because the above is adapting a neural network  indirectly via another vector $\Lambda$ that is linearly mapped to the original parameter space, we refer to this as using a \emph{linear adapter}, akin to the usage of ``adapter'' in the parameter-efficient finetuning literature \citep{houlsby2019parameter,lora}.
Since $\mS^\adj \in \R^{d \times r}$,  we have $\dim(\update{\Lambda}{t}) = \dim(\update{\linearMap{S}\Theta}{t})$ and $\dim(\optimState{\Lambda}{t}) = \dim(\optimState{\linearMap{S} \Theta}{t})$, i.e., 
the output and states of our optimizers have the exact same dimension in both cases.
This is not a coincidence: we now show that optimizing this linear adapter when $\Lambda$ is initialized to $\vzero$  is 
\emph{equivalent} to optimizing $\Theta$ in the original neural network with linearly transformed gradient dynamics. 
(For this and all subsequent proofs, refer to \cref{sec:proofs}).

\begin{restatable}[Equivalence of gradient transformations and linear adapters]{theorem}{gradProjIsAdapter}
 Suppose we are given initial parameters $\Theta^{(0)}$ and state $\xi^{(0)}_{\linearMap{S}\Theta}$.
Let $\Theta^{(t)}$ be the parameters  after $t$ update steps with the linearly transformed gradient dynamics with $\linearMap{S}$. 
Now consider a linear adapter which reparameterizes the model as $\Theta^{(0)} + \mS^\top \Lambda^{}$, where $\Lambda^{(0)}$ is initialized to $\vzero$ and the optimizer state $\optimState{\Lambda}{0}$ is initialized to $\optimState{\linearMap{S}\Theta}{0}$, and only $\Lambda$ is optimized.
Then we have $\Theta^{(t)} = \Theta^{(0)} + \linearMap{S}^{\adj}\Lambda^{(t)}$ for all $t$, i.e., the optimization trajectories are equivalent.
\label{thm:grad-proj-is-adapter}
\end{restatable}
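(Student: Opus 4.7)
The plan is to proceed by induction on the number of optimizer steps $t$, showing simultaneously that $\Theta^{(t)} = \Theta^{(0)} + \mS^\adj \Lambda^{(t)}$ and $\optimState{\Lambda}{t} = \optimState{\linearMap{S}\Theta}{t}$. The base case $t = 0$ is immediate from the hypotheses of the theorem: since $\Lambda^{(0)} = \vzero$, we have $\Theta^{(0)} + \mS^\adj \Lambda^{(0)} = \Theta^{(0)}$, and the two optimizer states agree by the initialization assumption.

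For the inductive step, assume both equalities hold at step $t$. The crucial observation is that the gradient fed into the optimizer is identical in the two dynamics. Under the reparameterization $\Theta = \Theta^{(0)} + \mS^\adj \Lambda$, the chain rule yields
\[ \cot{\Lambda}^{(t)} = \mS \, \cot{\Theta}^{(t)}, \]
where $\cot{\Theta}^{(t)}$ denotes the gradient of $\Loss$ with respect to its $\Theta$-argument evaluated at $\Theta^{(0)} + \mS^\adj \Lambda^{(t)}$; by the inductive hypothesis this point coincides with $\Theta^{(t)}$, so the quantity above matches exactly the transformed gradient $\mS \cot{\Theta}^{(t)}$ appearing in the gradient-transformation dynamics. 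Because the optimizer states also coincide by the inductive hypothesis, and the optimizer is a deterministic function of its (gradient, state) arguments, a single call returns the same update and the same next state in both scenarios: $\update{\Lambda}{t} = \update{\linearMap{S}\Theta}{t}$ and $\optimState{\Lambda}{t+1} = \optimState{\linearMap{S}\Theta}{t+1}$.

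Matching the parameter updates then closes the induction. From $\Lambda^{(t+1)} = \Lambda^{(t)} + \update{\Lambda}{t}$ and the equality of updates established above,
\[ \Theta^{(0)} + \mS^\adj \Lambda^{(t+1)} = \bigl(\Theta^{(0)} + \mS^\adj \Lambda^{(t)}\bigr) + \mS^\adj \update{\Lambda}{t} = \Theta^{(t)} + \mS^\adj \update{\linearMap{S}\Theta}{t} = \Theta^{(t+1)}, \]
as desired, and the second inductive statement on optimizer states was already handled above.

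The main subtlety to watch is the gradient computation: one must verify carefully that differentiating $\Loss(f(\mX; \Theta^{(0)} + \mS^\adj \Lambda))$ with respect to $\Lambda$ produces $\mS$ (and not $\mS^\adj$) applied to $\cot{\Theta}$, and one must implicitly assume that the same minibatch (and hence the same stochastic loss) is used at step $t$ in both scenarios so that the per-step gradients actually coincide. Beyond this, the proof is a clean two-track induction that goes through for \emph{any} optimizer that is a deterministic function of its (gradient, state) inputs, which covers SGD, Adam, AdamW, and the other optimizers typically used in practice.
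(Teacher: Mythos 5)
Your proof is correct and follows essentially the same route as the paper's: a joint induction on the step index maintaining both $\optimState{\Lambda}{t} = \optimState{\linearMap{S}\Theta}{t}$ and $\Theta^{(t)} = \Theta^{(0)} + \linearMap{S}^\adj \Lambda^{(t)}$, with the chain-rule identity $\cot{\Lambda}^{(t)} = \linearMap{S}\,\cot{\Theta}^{(t)}$ as the key step. One correction to your closing remark: AdamW does \emph{not} fit the required optimizer form, because its decoupled weight decay makes the update depend on the current parameter values rather than only on the (gradient, state) pair --- the paper flags exactly this in the remark following the theorem and in its appendix on weight decay, so you should drop AdamW from your list of covered optimizers.
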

\begin{remark}
The above only requires that the reparameterized model is equivalent to the original model at initialization, and can therefore be straightforwardly extended to cases where the adapter is not initialized to $\mathbf{0}$, as long as we have $\Theta^{(0)} = \tilde{\Theta} + \mathbf{S}^\top  \Lambda^{(0)}$ for some $\tilde{\Theta}$ and $\Lambda^{(0)}$.
\end{remark}

\begin{remark}
The above theorem holds for any optimizer of the form in \cref{eq:optim-definition}, e.g, Adam~\citep{adam}.
Notably, AdamW~\citep{adamw} does not fit this definition due to the way that weight decay is applied.
See \cref{app:weight-decay} for a discussion about weight decay, and what adjustments are required to preserve the equivalence. 
\end{remark}

\subsection{Kronecker-factored Gradient Transformations}
\label{sec:special-cases}
The formulation in \cref{thm:grad-proj-is-adapter} assumes very little about the neural network being trained and the gradient transformation (or, equivalently, linear adapter) being applied, which makes it difficult to enable practical memory savings. Concretely, consider applying an arbitrary linear transformation to just a single linear layer of a neural network with parameters $\mW \in \R^{m \times n}$, i.e., $f(\mX ; \Theta) = \mW \mX$. In this case we have $\Theta = \vec{\mW} \in \R^{mn}$,\footnote{Intuitively, $\vec{\cdot}$ sends a matrix to its vectorized form (i.e., stacks its columns into a vector), and $\vecinv{\cdot}$ is its inverse (i.e., unstacks vector back into matrix form).} and thus arbitrary linear maps of the form $\linearMap{S} \in \R^{r \times mn}$ require $O(mnr)$ memory to store.
This cost is already non-trivial for a single linear layer of moderate size, and becomes rapidly intractable if we consider applying gradient transformations to the entirety of a model's parameters.
As such, practical applications need to consider matrices $\mS$ that are efficient to store in memory (and also efficient to apply to $\Theta$). 

To this end, we consider \emph{Kronecker-factored} linear maps of the form $\linearMap{S} = \mR^\adj \otimes \mL$ where $\mL \in \R^{d_L \times m}, \mR \in \R^{n \times d_R}, d_L d_R = r$. This particular parameterization of $\mS$ reduces the memory requirement to $O(d_Lm + nd_R)$ and FLOPs to $\min\{O(d_L m n + d_L n d_R), \, O(m n d_R + d_L m d_R)\}$ (since $\mS\cot{\Theta} = \vec{\mL \cot{\mW} \mR}$), which can be memory-efficient if $d_L, d_R$ are small enough. 
We now show applying \cref{thm:grad-proj-is-adapter} to such an $\linearMap{S}$ establishes an equivalence between training with gradients transformed by ${\mL \cot{\mW} \mR}$, and  reparameterizing the linear layer as $\mW + \mL^\adj \mA \mR^\adj$ and only training $\mA \in \R^{d_L \times d_R}$.

\begin{restatable}[Kronecker-factored parameterization of the linear map]{proposition}{kronFacProjIsMora}
     Let $\mW^{} \in \R^{m \times n}$ be the parameter matrix of a linear layer with corresponding gradient matrix $\cot{\mW} \in \R^{m \times n}$. Further let $\Theta = \vec{\mW}$ and $\cot{\Theta} = \vec{\cot{\mW}}$. Consider training $\Theta$ as above with $\linearMap{S} = \mR^\adj \otimes \mL$, i.e., by transforming the gradient matrix via $\mL \cot{\mW}\mR$.
Then the optimizer trajectory of $\mW$ is equivalent to reparameterizing the model as
$
\mW = \mW^{(0)} + \mL^\adj \mA \mR^\adj,
$
and then just training $\mA$ (after initializing $\mA^{(0)} = \vzero$).
\label{thm:kron-factored-proj-is-mora}
\end{restatable}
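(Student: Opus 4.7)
The plan is to derive this proposition as a direct corollary of \cref{thm:grad-proj-is-adapter} by instantiating the linear map $\linearMap{S}$ with the Kronecker-factored form $\mR^\adj \otimes \mL$ and then repeatedly applying the standard identity $(\mB^\adj \otimes \mA)\vec{\mX} = \vec{\mA \mX \mB}$ to translate every vectorized quantity back into matrix form.

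First, I would set up the correspondence between vectorized and matrix-valued objects. With $\Theta = \vec{\mW}$ and $\cot{\Theta} = \vec{\cot{\mW}}$, the transformed gradient becomes
\[
\linearMap{S} \cot{\Theta} \;=\; (\mR^\adj \otimes \mL)\,\vec{\cot{\mW}} \;=\; \vec{\mL \cot{\mW} \mR},
\]
so the optimizer in the projected space sees exactly the matrix-valued gradient $\mL \cot{\mW} \mR \in \R^{d_L \times d_R}$. Likewise, writing $\Lambda = \vec{\mA}$ for some $\mA \in \R^{d_L \times d_R}$,
\[
\linearMap{S}^\adj \Lambda \;=\; (\mR \otimes \mL^\adj)\,\vec{\mA} \;=\; \vec{\mL^\adj \mA \mR^\adj},
\]
so the reparameterization $\Theta^{(0)} + \linearMap{S}^\adj \Lambda$ promised by \cref{thm:grad-proj-is-adapter} unvectorizes to $\mW^{(0)} + \mL^\adj \mA \mR^\adj$, which is precisely the form claimed in the statement. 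The initialization $\Lambda^{(0)} = \vzero$ corresponds to $\mA^{(0)} = \vzero$.

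Next, I would invoke \cref{thm:grad-proj-is-adapter} with this choice of $\linearMap{S}$. That theorem gives $\Theta^{(t)} = \Theta^{(0)} + \linearMap{S}^\adj \Lambda^{(t)}$ for every $t$, provided the projected-space optimizer state is shared between the two views. Applying $\vecinvtext$ to both sides and using the identity above yields
\[
\mW^{(t)} \;=\; \mW^{(0)} + \mL^\adj \mA^{(t)} \mR^\adj,
\]
where $\mA^{(t)} = \vecinv{\Lambda^{(t)}}$. Since $\vec{\cdot}$ is a bijection that commutes with any entrywise optimizer update (Adam, SGD with momentum, etc.), the matrix-valued optimizer trajectory of $\mA$ under the standard dynamics of \cref{eq:reparameterized-optim-definition} is exactly the unvectorization of the vector trajectory of $\Lambda$, so the two viewpoints produce identical weights at every step.

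The main obstacle, as I see it, is not technical depth but bookkeeping: ensuring that the optimizer state of the projected dynamics (which lives naturally in $\R^{d_L d_R}$ or equivalently $\R^{d_L \times d_R}$) is matched consistently between the two views, and being careful that the optimizer update rule is coordinate-wise in a way that is invariant under the $\vec{\cdot}$ / $\vecinvtext$ reshaping. Once that bijection is spelled out, the Kronecker identities do all the work, and the proposition reduces to a single instantiation of \cref{thm:grad-proj-is-adapter} with appropriate unvectorization.
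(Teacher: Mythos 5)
Your proposal is correct and follows essentially the same route as the paper's own proof: instantiate \cref{thm:grad-proj-is-adapter} with $\linearMap{S} = \mR^\adj \otimes \mL$, apply the identities $(\mM \kron \mN)^\adj = \mM^\adj \kron \mN^\adj$ and $\vec{\mM\mN\mO} = (\mO^\adj \kron \mM)\vec{\mN}$, and unvectorize. Your additional remarks on matching the optimizer state and the $\vectext$/$\vecinvtext$ bijection are sound bookkeeping that the paper leaves implicit.
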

\begin{remark}
\Cref{thm:kron-factored-proj-is-mora} shows that MoRA~\citep{mora}, LoRA-XS~\citep{balazy2024lora}, and PMSS \cite{wang2024pmss}, which are recent approaches to parameter-efficient finetuning which reparameterize a linear layer as $\mW + \mB \mA \mC$ and only train $\mathbf{A}$, can  be interpreted as training the model with linearly-transformed gradients where the linear transformation has a Kronecker factorization.
\end{remark}

Finally, as a simple corollary we now show that one can set $\linearMap{S}$ in a way that recovers GaLore, which in reparameterized form corresponds to one-sided LoRA, i.e., fixing one of the adapter matrices and only the training the other.

\begin{table*}[t]
\footnotesize
\centering

\begin{tabular}{lllllclc}
\toprule
Method                 & Adapter Parameterization & Trained & Frozen & Persisted   \\ \midrule
Baseline                  & $\mW$ & $\mW$ & $-$ &   $\mW$ \\
ReLoRA \cite{relora} & $\mW + \mB\mA$ & $\mA,\mB$ &  $\mW$ &  $\mW, \mA, \mB$  &   \\                           

Gradient SVD \citep[GaLore; ][]{galore} & $\mW + \mP^\top\mA$, \,\, $\mP^\top = \operatorname{SVD}(\cot{\mW})$ & $\mA$ & $\mW,\mP$ & $\mW, \mP, \mA$                               \\
Gaussian \citep[Flora;][]{hao2024flora} &           $\mW + \mP^\top\mA$, \,\,  $\mP \sim k \mathcal{N}(\mathbf{0}, \mathbf{I})$ & $\mA$ & $\mW,\mP$ & $\mW, \mA$          \\ 
Rademacher &  $\mW + \mP^\top\mA$, \,\,  $\mP \sim k \operatorname{Unif}(\{-\mathbf{1}, \mathbf{1}\})$ & $\mA$ & $\mW,\mP$ & $\mW, \mA$                \\ 
Random Semi-orthogonal      &  $\mW + \mP^\top\mA$, \,\,  $\mP^\top \mP = k\mI$ & $\mA$ & $\mW,\mP$   & $\mW,\mP,\mA$          \\ 
Two-sided Gaussian &  $\mW + \mL^\top\mA\mR^\top$,  \,\,  $\mL, \mR \sim k \mathcal{N}(\mathbf{0},\mathbf{I})$ & $\mA$ & $\mW, \mL,\mR$ & $\mW, \mA$        \\
Two-sided Gradient SVD &  $\mW + \mL^\top\mA\mR^\top$, $\mL^\adj, \,\,   \mR^\adj = \operatorname{SVD}(\cot{\mW})$ & $\mA$ & $\mW,\mL,\mR$ & $\mW, \mL, \mR, \mA$        \\
\bottomrule
\end{tabular}
\vspace{-2mm}
\caption{A summary of methods tested for our pretraining experiments, where we list the gradient transformation method (which is not relevant for Baseline/ReLoRA) and the corresponding adapter parameterization. We also break down the reparameterized model into trained and frozen components, alongside the the set of components that need to be persisted in memory; for methods that make use of easy-to-materialize random sketching matrices (e.g., Gaussian) one only needs to persist the random number generator \emph{seeds} for the gradient transformation, saving memory. Random semi-orthogonal matrices---a tall/wide  matrix whose columns/rows are orthonormal vectors---are also random but are not straightforwardly materializable from a seed, and hence may need to be persisted across optimization steps. In the Gaussian and Rademacher cases, we use $k$ as shorthand for the constant that ensures that $\expect[\mP \mP^\adj] = \mI$.
}
\label{tab:methods}
\vspace{-2mm}
\end{table*}

\begin{restatable}[GaLore is one-sided LoRA]{corollary}{galoreIsLora}
Let $\mW^{} \in \R^{m \times n}$ be the parameter matrix of a linear layer with corresponding gradient matrix $\cot{\mW} \in \R^{m \times n}$.
Without loss of generality, assume $m \le n$. 
Now consider training $\mW$ with $\optimizer$ using GaLore, i.e., where we linearly transform the gradient matrix with a matrix $\mP$,
 apply the optimizer, and transform the update via $\mP^\adj$, viz.,
\begin{align*}
   (\update{\mW}{t}, \optimState{\mW}{t+1}) &= \optimizer(\vec{\mP {\cot{\mW}}^{(t)}}, \optimState{\mW}{t}) \\
   \mW^{(t+1)} &= \mW^{(t)} + \mP^{\adj}\vecinv{\update{\mW}{t}}
\end{align*}
where $\mP$ is an arbitrary matrix of size $\R^{d \times m}$ and $d \le m$ controls the dimensionality of the transformation.
Then the optimizer trajectory of this network is equivalent to a network trained with the reparameterization $\mW = \mW^{(0)} + \mP^\adj \mA$,
where only $\mA$ is learned.
\label{thm:galore-is-lora}
\end{restatable}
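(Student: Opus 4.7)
The plan is to derive Corollary~\ref{thm:galore-is-lora} as an immediate specialization of Proposition~\ref{thm:kron-factored-proj-is-mora}. Concretely, I would instantiate the Kronecker-factored linear map $\linearMap{S} = \mR^\adj \otimes \mL$ by setting $\mL = \mP$ and $\mR = \mI_n$, which yields $\linearMap{S} = \mI_n \otimes \mP \in \R^{dn \times mn}$, with adapter inner dimensions $d_L = d$ and $d_R = n$.

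Next, I would verify that this choice of $\linearMap{S}$ reproduces the GaLore dynamics written in the corollary. Using the standard identity $\vec{\mA \mX \mB} = (\mB^\adj \otimes \mA)\vec{\mX}$, the transformed gradient is $\linearMap{S}\vec{\cot{\mW}} = \vec{\mP \cot{\mW}\, \mI_n} = \vec{\mP \cot{\mW}}$, which is exactly the vectorized quantity fed into the optimizer in the corollary. Similarly, $\linearMap{S}^\adj = \mI_n \otimes \mP^\adj$, so applying $\linearMap{S}^\adj$ to the optimizer update and unvectorizing gives $\mP^\adj \vecinv{\update{\linearMap{S}\mW}{t}}$, matching the original-space update. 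Hence the optimization trajectory of $\mW$ under GaLore coincides with that of $\Theta = \vec{\mW}$ under linearly-transformed gradients with this particular $\linearMap{S}$.

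With this identification established, Proposition~\ref{thm:kron-factored-proj-is-mora} directly delivers the claim: training with the GaLore dynamics is equivalent to reparameterizing the layer as $\mW = \mW^{(0)} + \mL^\adj \mA \mR^\adj = \mW^{(0)} + \mP^\adj \mA \mI_n = \mW^{(0)} + \mP^\adj \mA$, initializing $\mA \in \R^{d \times n}$ to $\vzero$, and training only $\mA$ (with the adapter optimizer state initialized from the GaLore optimizer state). There is no substantive obstacle in this argument; the main care required is consistent use of the $\vec$/$\otimes$ identity to ensure the two descriptions of the per-step update are literally the same object, and checking that the resulting adapter shape $(d \times n)$ agrees with what the corollary asks for.
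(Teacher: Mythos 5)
Your proposal is correct and follows essentially the same route as the paper: the paper's proof also sets $\linearMap{S} = \mI_n \otimes \mP$, uses the identity $\vec{\mM\mN\mO} = (\mO^\adj \otimes \mM)\vec{\mN}$ to match the GaLore dynamics, and then observes this is the special case of \cref{thm:kron-factored-proj-is-mora} with $\mL = \mP$ and $\mR = \mI$. The only cosmetic difference is that the paper routes the argument explicitly through \cref{thm:grad-proj-is-adapter} before noting the reduction to \cref{thm:kron-factored-proj-is-mora}, whereas you invoke the proposition directly; the substance is identical.
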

\begin{remark}
The original GaLore work advocates for swapping out the gradient transformation every 200 optimizer steps. 
This does not break the equivalence in \cref{thm:galore-is-lora}.
In the adapter formulation, recomputing the gradient transformation corresponds to merging the learned adapter into the frozen weights, updating the frozen part of the adapter, and resetting the learned part to zero.
This effectively amounts to ReLoRA~\citep{relora}, where one side of the adapter is kept frozen throughout training.
\end{remark}

While \cref{thm:kron-factored-proj-is-mora,thm:galore-is-lora} focus on the case of a single linear layer, it is straightforward to generalize them to multiple linear layers.
For example, one could treat the parameters of all layers as a single vector living in the product space of the individual layers' parameter spaces, and define the gradient transformation map $\linearMap{S}$ on that space as applying the correct projection to each of the layers' parameters individually.
This can be implemented by modifying the optimizer step to apply a separate linear transformation to each layer.

Finally, we note that
\citet{hao2024flora} and \citet{loeschckeloqt} also show that  training LoRA adapters with one side frozen with ordinary SGD is equivalent to applying a linear transformation to the gradient matrix, and \citet{muhamed-etal-2024-grass} further shows this for more general optimizers as in \cref{thm:galore-is-lora}.
Our \cref{thm:grad-proj-is-adapter} can be thus be seen as a generalization of these recent results, where we show that this equivalence generalizes to arbitrary parameters of the neural network.

\vspace{-1mm}
\section{Empirical Study}
\vspace{-1mm}

The equivalences in \cref{sec:theory} are agnostic to the choice of left and right transformations in $\mS = \mR^\adj \kron \mL$.
However, one might expect that the choice of $\mL$ and $\mR$ should matter for downstream performance.
Hence, in the following sections, we first explore how the choice of  $\linearMap{S}$ affects pretraining\footnote{We target the pretraining setting as the gap between ordinary training and memory-efficient training methods is typically larger in pretraining than it is in finetuning.} performance, and how by viewing gradient transformations as adapters, we further improve  memory efficiency by combining the technique with QLoRA-style \citep{qlora} training (\cref{sec:applications-galore-to-lora}).
We then show how the converse is also useful: by viewing LoRA adapters through the lens of gradient transformations, we can improve distributed training of LoRA adapters by coordinating the LoRA adapter initialization across different workers (\cref{sec:applications-lora-to-galore}).

\noindent\textbf{Experimental setup.}
We consider two moderate-scale language modeling settings: a \tinyB setting (training on 5B tokens) and a \largeB setting (training on 10B tokens).\footnote{While this is not large by modern standards, due to our limited compute resources this is the largest setting at which we can feasibly perform experiments.}
We use the Llama Transformer architecture  \citep{llama} and train on the SlimPajama~\citep{slimpajama} dataset, tokenized using the Llama-2~\citep{llama2} tokenizer, using sequences of length 2048.
All numbers we report are perplexity on a disjoint (validation) set of SlimPajama.
We use AdamW~\citep{adamw} with weight decay $0.1$, $\beta_1 = 0.9$ and $\beta_2 = 0.95$.
We  warm up the learning rate to $4 \times 10^{-4}$, before decaying it via a cosine decay schedule to
$1 \times 10^{-4}$.
We conduct all training in \texttt{bfloat16} precision.
See \cref{sec:architectural-details} for more details on our experimental setup.

\subsection{Study 1: Memory-Efficient Pretraining}

\label{sec:applications-galore-to-lora}

The discussion in \cref{sec:theory} establishes a direct link between GaLore and one-sided LoRA. 
But how should we set $\mS$ in practice?
From the perspective of accurate gradient estimation, it would  be ideal to have $\mS^{\adj}\mS \approx \mI_{}$, since in the vanilla SGD case this would be equivalent to performing SGD with \emph{sketched gradients}, where $\mS^{\adj}\mS\cot{\Theta} \approx \cot{\Theta}$~\citep{murray-rand-nla}. For the GaLore case with $\mS = \mI \otimes \mP$, this amounts to setting $\mP$ such that $\mP\mP^\adj \approx \mI$, which could be achieved by, e.g., using random sketching matrices with the property $\mathbb{E}[\mP\mP^\top] = \mathbf{I}$.\footnote{This sketching view of LoRA  provides a possible perspective on why one-sided LoRA finetuning works well in practice \citep{zhang2023lora,zhu2024asymmetry,hayou2024lora}.} As noted by \citet{hao2024flora}, using a random sketching matrix can enable further savings as only the random number generator (RNG) seed needs to be persisted across optimization steps. We thus experiment with a variety of sketching matrices for LoRA-based pretraining as shown in \cref{tab:methods}.

Another benefit of the adapter parameterization of gradient projections is that it allows us to be more memory efficient by quantizing the base weights as done in QLoRA~\citep{qlora}. Specifically, given the adapter parameterization $\Theta + \mS^\top \Lambda$ we can quantize $\Theta$ and only train $\Lambda$, thus enabling further memory savings. Finally, the adapter parameterization has the additional benefit of reducing the number of trainable parameters being registered for automatic differentiation, which allows for gradient accumulation to happen in a lower dimensional space. (See  \cref{sec:architectural-details} for more discussion.)

\begin{table}[t]
\footnotesize
\centering

\begin{tabular}{lScSc}
\toprule
\multirow{2}{*}{Model}                       & \multicolumn{2}{c}{\tinyB} & \multicolumn{2}{c}{\largeB} \\ \cmidrule(l){2-3}  \cmidrule(l){4-5}
                                             & {PPL}        & {Mem.}        & {PPL}      & {Mem.}        \\ \midrule
Full pretraining                                     & 18.57609   & \gb{1318060032}  & 12.44403 & \gb{8043626496} \\
ReLoRA                                       & 20.40165   & \gb{1027080192}  & 13.93705 & \gb{5774770176} \\
QGaLore (\texttt{INT8}) & 23.85785   & \gb{940572672}  & 15.23392 & \gb{5145624576} \\ 
\midrule
Gradient SVD (GaLore)                                      & 21.33871    & \gb{964165632}  &  13.62381  & \gb{5271453696} \\
 \,\, + \texttt{INT8}   & 21.38119   & \gb{810024960}  & 13.6492  & \gb{4061921280}  \\
 \,\,  + \texttt{NF4} (LoQT)     & 26.52431   & \gb{732954624}  & 16.10453 & \gb{3457155072} \\
Gaussian (Flora)                                     & 20.57184   & \gb{932708352}  & 13.87738 & \gb{5019795456} \\ 
\,\, + \texttt{INT8}                     & 20.5491    & \gb{778567680}  & 13.87094 & \gb{3810263040} \\
 \,\, + \texttt{NF4}                        & 23.6076    & \gb{701497344}  & 15.63821 & \gb{3205496832} \\
Rademacher                                   & 20.23679   & \gb{932708352}  & 13.85819 & \gb{5019795456} \\
\,\, + \texttt{INT8}    & 20.25514   & \gb{778567680}  & 13.78132 & \gb{3810263040} \\
  \,\, + \texttt{NF4}   & 23.36888   & \gb{701497344}  & 15.64042 & \gb{3205496832} \\
Random Semi-orthogonal                                   & 20.12858   & \gb{964165632}  & 13.71225 & \gb{5271453696} \\
\,\, + \texttt{INT8}                   & 20.31992   & \gb{810024960}  & 13.74742 & \gb{4061921280} \\
  \,\, + \texttt{NF4}                & 23.40606   & \gb{732954624}  & 15.4372 & \gb{3457155072} \\
Two-sided Gaussian                                & 23.97549   & \gb{932584440}  & 15.27551 & \gb{5018297280} \\ 
\,\, + \texttt{INT8}                & 23.93711   & \gb{778443768}  & 15.20067 & \gb{3808764864} \\
  \,\, + \texttt{NF4}                 & 27.93375   & \gb{701373432}  & 16.95108 & \gb{3203998656} \\
Two-sided Gradient SVD                                & 22.25781 & \gb{1126489080}  & 14.27293 & \gb{6554985408} \\ 
\,\, + \texttt{INT8}                & 22.08144 & \gb{972348408}  & 14.15712 & \gb{5345452992} \\
  \,\, + \texttt{NF4}                 & 26.81449 & \gb{895278072}  & 17.1356 & \gb{4740686784} \\

\bottomrule
\end{tabular}
\vspace{-2mm}
\caption{
Pretraining results at \tinyB and \largeB scales. We report validation perplexity and estimated memory requirements (excluding activations) in GBs. GaLore + $\texttt{NF4}$ quantization is equivalent to LoQT~\citep{loeschckeloqt}. QGaLore~\citep{qgalore} quantizes both the base weights and the SVD projection to \texttt{INT8}, but does not adopt the LoRA parameterization.}

\label{tab:full-results}
\end{table}

\noindent\textbf{Results.}
The results are shown in \cref{tab:full-results}, where we follow the original GaLore paper and use a rank of $256$ for the \tinyB model and a rank of $512$ for the \largeB model,\footnote{The only exceptions are for the double-sided methods. For the two-sided Gaussian,  we set the rank as to match the number of trainable parameters in the one-sided Gaussian approach. For the two-sided SVD, we use the same rank as in two-sided Gaussian, which incurs more memory since the projection matrices must be persisted across optimization steps.} and further merge the  adapters into the full weights and reinitialize them every 200 steps. 
We see that one-sided transformations, regardless of their nature, perform somewhat similarly at both \tinyB and \largeB scale, suggesting that using a random gradient transformation matrix that can be cheaply rematerialized on-the-fly may be more economical than using the top singular vectors derived from the gradient as in GaLore.
We also find that ReLoRA performs comparably to one-sided gradient transformations, suggesting that the additional flexibility of ReLoRA (i.e., optimizing two sides of a LoRA adapter instead of only one side) is not necessary.
Using two-sided Gaussian gradient transformations degrades performance when memory consumption is matched to one-sided methods;
two-sided SVD-based projections fare slightly better but still trail behind one-sided methods and incur a much larger memory cost, since two projection matrices must be persisted.
While \citet{galore} report no gap between GaLore and full pretraining, we did not find this to be true on our setup,\footnote{Which is different from theirs in many ways, e.g., we train on longer sequences with gradient accumulation using \texttt{bfloat16}.} and instead observe a non-trivial gap between regular (full) training and these memory-efficient pretraining methods.

\begin{figure}[t]

        \centering
        \includegraphics[width=\linewidth]{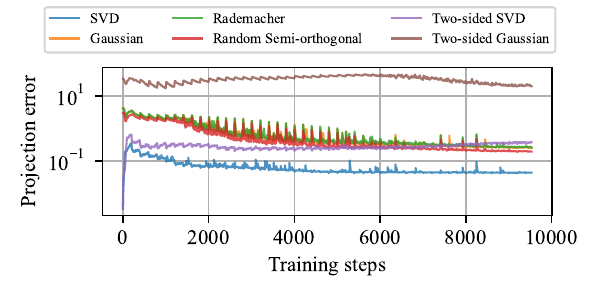}
        \vspace{-6mm}
    \caption{Average of gradient reconstruction error $\Vert \cot{\Theta} - \mS^\top \mS \cot{\Theta}\Vert_2^2$ of the various transformations across training steps at \tinyB scale.}
    \label{fig:reconstruction_figure}
    \vspace{-2mm}
\end{figure}

When adding quantization to the base weights (where we use groups of size 256), we find that, across the board, 8-bit integer quantization can be performed without major performance degradation, whereas 4-bit NormalFloat quantization  begins to incur a penalty (4-bit integer did even worse).
This suggests that adapter-based training, with a rematerializable gradient transformation and 8-bit integer quantization of the base weights, is a promising recipe for memory-efficient pretraining.
Finally, we find that QGaLore~\citep{qgalore}, which quantizes the weights to \texttt{INT8} precision and trains these \texttt{INT8} weights directly using an SVD gradient transformation, underperforms a QLoRA-style approach to quantized GaLore training.

\noindent\textbf{Analysis.}
We have motivated our experiments with sketching matrices from the perspective of accurate gradient compression, i.e., we use $\mS$ to compress the gradient, and then $\mS^\adj$ to decompress it.
From this compression viewpoint, one may then wonder whether different gradient transformations exhibit different reconstruction capabilities, and whether this ultimately dictates the performance of the resulting model. As shown in \cref{fig:reconstruction_figure}, we find that the gradient reconstruction error does not correlate with performance. As expected, methods that perform SVD on the gradients have low reconstruction error (since SVD explicitly minimizes a reconstruction objective), but as shown in \cref{tab:full-results}, SVD performs similarly to sketching matrices, which have higher reconstruction error. We believe that the relationship between the nature of the gradient transformation and downstream performance is fairly complex, and merits further investigation. \cref{fig:reconstruction_figure_full} of \cref{sec:additional-results} shows similar results for cosine similarity instead of squared error.

\subsection{Study 2: Distributed Pretraining}

\label{sec:applications-lora-to-galore}

\begin{table}[t]
\footnotesize
\centering
\begin{tabular}{llSS}
\toprule
Method & Projection Init.   &                                    {200M}      & {1B}        \\ \midrule
Dist. Training (DiLoCo) & $-$                                     & 18.00018958 & 12.7678594  \\ 
Dist.\ ReLoRA (LTE) & $-$                                     & 20.97318755 &  13.71558336 \\ 
 \midrule
Identical Random   & $\mP_i = \mP_j$ & 21.50739451 & 14.27987079 \\
Independent Random   & $\expect[ \mP_i \mP_j^\top] = \mathbf{0}$ & 20.11480801 & 13.65508717 \\
Distributed Random & $\mP_i  \mP_j^\top = \mathbf{0}$  & 19.81358752 & 13.51449892 \\
\bottomrule
\end{tabular}
\vspace{-2mm}
\caption{Results of the distributed training experiments, where four workers are trained independently and synchronized every 500 steps, following DiLoCo~\citep{diloco}. We use random semi-orthogonal matrices for the distributed (one-sided) LoRA experiments. For the (re)initializations of worker-specific projections $\{\mP_k\}_{k=1}^K$, \emph{identical} shares the projection matrix across workers, \emph{independent} initializes each worker's projection independently, and \emph{distributed} initializes the worker projections such that they are all orthogonal to one another. The top two rows are our baselines, viz., DiLoCo and a distributed variant of ReLoRA, which is similar to LTE~\citep{lte}.}
\vspace{-4mm}
\label{tab:result-distributed}
\end{table}

Our second experiment targets distributed pretraining of LLMs across poorly-connected and resource-constrained workers, which is important for many applications of interest, from federated training of LLMs to scaling up LLMs across data centers that are not co-located, i.e., where techniques like FSDP are not possible. DiLoCo~\citep{diloco} is a recent and effective approach that has workers train independently for some number of iterations using an inner optimizer, and then uses the average change in parameters from each worker as a ``pseudo-gradient'' on an outer optimizer that updates a global copy of the parameters (i.e., as in federated learning;~\citealp{mcmahan2017communication,reddi2020adaptive}). This  approach has since been scaled up to train 10B LLMs across distributed workers.\footnote{See \href{https://www.primeintellect.ai/blog/intellect-1}{INTELLECT-1} and \citep{opendiloco}.}

However, DiLoCo still assumes that each worker has enough memory to perform a full forward/backward pass on the model, i.e., it does not target memory efficiency. A memory-efficient distributed training approach that is of particular interest in light of the equivalence in \cref{sec:theory} is LoRA-the-explorer~\citep[LTE;][]{lte}, which can be seen as an extension of ReLoRA to the distributed setting. LTE has $K$ independent workers train separate LoRA adapters for a small number  of {local} steps, and then performs a {global} step by averaging the adapters across workers. The globally-averaged adapter is then merged into the base weights, and optimization continues  by resetting and traiing the worker-specific LoRA adapters.

 We will now describe how the equivalence in \cref{sec:theory} can be used to derive an improved version of LTE, which trains only one side of the LoRA adapter in each worker, but initializes the frozen side in a worker-aware manner.
Consider a one-sided analogue of LTE, where the weight $\mW^{(g, l)}_k$ for the $k$th worker after $g$ global and $l$ local updates is
\setlength{\abovedisplayskip}{3pt}
\setlength{\belowdisplayskip}{3pt}
\begin{align*}
    \mW^{(g, l)}_k = \mW^{(g, 0)}_k + {\mP^{(g)}_k}^\adj \mA^{(g, l)}_k
\end{align*}
and only $\mA$ is trained. The global step is given by,
\begin{align*}
    \mW^{(g + 1, 0)}_k 
    &= \mW^{(g, 0)}_k + \frac{1}{K} \sum_{k = 1}^K {\mP^{(g)}_k}^\adj \mA^{(g, L)}_k
\end{align*}
where we have assumed that the global step is performed after $L$ local steps.
After a global step, we would also reset $\mA$ by setting $\mA_k^{(g + 1, 0)} = \vzero$ and similarly swap out $\mP_k$ for another (e.g., random) matrix for all $k$.

By \cref{thm:galore-is-lora}, local steps must correspond to training the worker weights using a gradient transformation, 
\begin{align*}
    \mW^{(g, l)}_k = \mW^{(g, 0)}_k + {\mP^{(g)}_k}^\adj \Delta_{\mP \mW}^{(g, l)},
\end{align*}
where we use $\Delta_{\mP \mW}^{(g, l)}$ to denote the optimizer update that was performed in the lower dimensional transformed space.
Further, a global step in this view can be equivalently seen as defining a global pseudo-gradient $\varDelta^{(g)}$ as the average of for the local pseudo-gradients $\{\varDelta^{(g)}_1, \ldots, \varDelta^{(g)}_K\}$,
\begin{align*}
    \varDelta^{(g)} &= \frac{1}{K} \sum_{k = 1}^K \varDelta_k^{(g)}, \,\,\,\,
    \varDelta_k^{(g)} = \mW^{(g, L)}_k - \mW^{(g, 0)}_k 
\end{align*}
The global weight update is then given by a step using the global pseudo-gradient,
\begin{align*}
    \mW^{(g + 1, 0)}_k = \mW^{(g, 0)} + \varDelta^{(g)}.
\end{align*}
This can be straightforwardly generalized to the use of different learning rates and more advanced optimizers.

\setlength{\abovedisplayskip}{\baselineskip}
\setlength{\belowdisplayskip}{\baselineskip}

One approach to initialize/reset the frozen side of worker-specific LoRA adapters (i.e., the gradient projections $\mP_{k}^{}$) is to sample a projection and broadcast it to all workers. However, the GaLore--LoRA duality suggests a different scheme. \cref{thm:grad-proj-is-adapter} shows that training with linear gradient transformations only optimizes a \emph{subspace} of the full model, namely $\range(\mS^\adj)$ where $\mS^\top = \mI \kron \mP^\top$ in the GaLore case. This suggests a different approach to distributed LoRA training, wherein the frozen part of each LoRA adapter, $\mS_1, \ldots, \mS_K$, is initialized differently, so that the sum of their ranges allows a larger subspace to be trained. 
For example, we could sample random semi-orthogonal matrices $\mP_1, \ldots, \mP_K$ uniformly at random, assign each worker $\mS_i = \mI \kron \mP^\top_i$, and they will likely each cover different portions of the space.
An even stronger strategy would be to demand that $\range(\mS^\top_1), \ldots, \range(\mS^\top_K)$ must be mutually orthogonal, which can be realized by keeping the $\mP_i$'s as semi-orthogonal, but enforcing that $\mP_i \mP_j^\top = \vzero$ (e.g., by generating a random $m \times m$ orthogonal matrix and having each worker take a different $d \times m$ submatrix.)
Intuitively, this ensures that no worker is duplicating the work of another, since their projections are pairwise orthogonal. We experiment with such \emph{identical random}, \emph{independent random}, \emph{distributed random} initialization schemes.

\begin{table}[t]
\footnotesize
\centering
\vcram{-2mm}
\begin{tabular}{lSSS}
\toprule
\multirow{2}{*}{Method} & \multicolumn{3}{c}{(Rank, Workers)} \\  \cmidrule(l){2-4}
                               & { (128, 8)}  & { (256, 4)}  & { (512, 2)}  \\ \midrule
 Dist. Training (DiLoCo)                                        & 17.80935333 & 18.00018958 & 18.56305978 \\
Dist. ReLoRA (LTE)                                        & 23.76163306 & 20.97318755 & 19.53845727 \\
Identical Random     & 23.95595272 & 21.50739451 & 20.31663561 \\
Independent Random & 20.64390346 & 20.11480801 & 19.97153946 \\
Distributed Random & 20.31710547 & 19.81358752 & 19.65598032 \\
\bottomrule
\end{tabular}
\vspace{-2mm}
\caption{Results of the distributed pretraining experiments as we vary the rank of the gradient projections and number of workers. For the DiLoCo baseline, we only vary the number of workers, which means that as we increase the number of workers, the DiLoCo baseline can only benefit since we are training on more data without any downside (i.e., a rank restriction).
Note that the for the distributed ReLoRA baseline (which is similar to LTE,~\citealp{lte}), we have double the number of trainable parameters as in the one-sided methods.
}
\label{tab:sweep-distributed}
\vspace{-4mm}
\end{table}

\noindent\textbf{Results.}
The results for the main set of experiments are shown in \cref{tab:result-distributed}. 
 We consider two baselines: (i) DiLoCo~\citep{diloco}, which has each worker training independently for $500$ steps before computing a pseudo-gradient that is used to update the global parameters using SGD with Nesterov momentum~\citep{nesterov}, and (ii) distributed ReLoRA, which is an analog of ReLoRA but adapted to train like DiLoCo, i.e., one trains the LoRA adapter for $500$ steps and defines the adapter weight as the pseudo-gradient for the Nesterov step; this is very close to LTE.\footnote{LTE can be seen as using SGD as the optimizer on the pseudo-gradients, but we found this led to worse results in preliminary experiments.} Our distributed GaLore experiments make use of random semi-orthogonal projections since the distributed random initialization for it is easy to compute, and does not add significant communication overhead.\footnote{Each worker just needs the seed used to sample the orthogonal matrix, and the indices of the rows  it will keep.}
As in \cref{sec:applications-galore-to-lora},  distributed GaLore leads to degradations at both \tinyB and \largeB scales compared to the full distributed training baseline (i.e., DiLoCo). However, our distributed random initialization scheme, where workers are ``aware'' of each, performs  well, thus demonstrating the utility of the gradient transformation--adapter duality from \cref{sec:theory}.

\noindent\textbf{Analysis.} We perform a study at the 200M scale over how the number of workers and rank affect performance. Intuitively, larger ranks lead to a larger subspace being trained by each worker (and, in the limit, we should recover something akin to DiLoCo when there is no rank reduction), so we would expect performance to improve as we increase the rank.
Indeed, the results for this ablation (shown in \cref{tab:sweep-distributed}) confirm this intuition, likely because DiLoCo benefits from more workers to get a better estimate of the pseudo-gradient for the outer optimizer step.
More surprisingly, we find that this gap is largely bridged by ensuring that different gradient transformations are assigned to each worker, with the distributed initialization once again performing the best. It would be interesting to further study how the effectiveness of the distributed initialization scheme changes as we go to more extreme settings (e.g., hundreds of extremely low-rank workers).

\section{Discussion and Limitations}

The preceding studies focus on two situations in which the duality between linear adapters and gradient transformations offers practical insights.
We believe there are many other avenues that merit further exploration.
For instance, \cref{thm:grad-proj-is-adapter} makes no assumptions about the structure of $\mS$; while we only considered Kronecker-factorized matrices, other linear maps that admit efficient storage and computation  would be interesting to explore. The transformation also just needs to be applied to a gradient vector, suggesting that one could compress the gradient of multiple layers in a network jointly. This amounts to sharing LoRA parameters across layers on the GaLore--LoRA case, which has been found to be successful for parameter-efficient finetuning \citep{renduchintala2023tied,song2024sharelora}.
eegardless of the structure of $\mS$, as discussed in \cref{sec:applications-galore-to-lora}, what characterizes a good $\mS$ is not clear but has a large impact. 
It may be possible to \emph{learn} a good $\mS$ with meta-learning-style approaches, which can be seen as \emph{learning an optimizer}~\citep[][\emph{i.a.}]{andrychowicz2016learning,li2016learning,wichrowska2017learned,bello2017neural}.\footnote{In the GaLore/LoRA case, learning $\mP$ in this meta-learning sense is different from learning $\mP$ in the ordinary LoRA sense, i.e., when both $\mP$ and $\mA$ are trained with gradient descent against the same loss function.} 
Finally, while we focused on linear gradient transformations, where we proved exact equivalence with a linear adapter parameterization, it may be possible to establish approximate equivalence between non-linear gradient transformations and other types of adapters.

Our work has several limitations. Due to  compute constraints, we were only able to scale our experiments to \largeB, which is small by industry standards. While our duality results are more general, our experiments primarily focus on the special case of the GaLore--LoRA duality.
We chose to focus primarily on a wide array of gradient transformations, but forgo a study of the interaction between such transformations and the choice of optimizer, projection reinitialization schedule, etc.
Ultimately, we believe that our results signal that these techniques could be applied at larger scales, especially when performing distributed training in memory-constrained regimes.

\vcram{-1mm}
\section{Related Work}
\vcram{-1mm}
\noindent\textbf{Memory-efficient training.}
There is a growing body of research focused on memory-efficient LLM training. This work explores the connections among GaLore~\citep{galore}, LoRA~\citep{lora}, QLoRA~\citep{qlora}, and ReLoRA~\citep{relora}. Various approaches in low-rank adaptations have been proposed to enhance these techniques~\citep{renduchintala2023tied,sheng2023s,zhang2023lora,xia2024chain,wang2023multilora,hao2024flora,wang2024pmss}, including efforts to train models from scratch~\citep{kamalakara2022exploring,wang2023cuttlefish,zhao2023inrank}. Broadly, memory-efficient training also encompasses methods such as adapters~\citep{houlsby2019parameter,mahabadi2021parameter}, which insert trainable layers  and prompt tuning~\citep{li-liang-2021-prefix,lester-etal-2021-power}, which optimizes continuous prompts. Additionally, its combination with quantization techniques~\citep{kwon-etal-2022-alphatuning} and other methods that update subparts of the parameter vector~\citep{guo2021diff,zaken2021bitfit,NEURIPS2021_cb2653f5} are also relevant.

\noindent\textbf{Memory-reduction via randomization.}
Randomization has been used in other contexts to reduce memory consumption in automatic differentiation.
\citet{adelman} and \citet{wta-rcs} perform row/column subsampling to reduce the amount of computation and memory required to compute gradients.
\citet{bershatsky} also explores Gaussian projections, but in the context of reducing activation memory by sketching them.
\citet{randomized-ad} construct gradient estimators by computing the gradient on a subsample of the paths in the computation graph.
More tangentially, MeZO~\citep{mezo} amounts to sketching the gradient of a neural network by performing forward-mode automatic differentiation on random vectors.

\vcram{-2mm}
\section{Conclusion}
\vcram{-2mm}
We proved a general equivalence between training an LLM with linear transformations of gradients and training with additive linear adapters, and showed the GaLore--LoRA equivalence is a special case of this result.
We then used this equivalence to derive more memory-efficient and performant methods for LLM pretraining, including combinations of quantization and gradient-projection methods and improved initialization for distributed adapter pretraining.

\section*{Impact Statement}
The last few years have seen widespread interest in LLMs.
Perhaps the most salient finding from the race to build the best LLMs is that increasing parameter counts in tandem with data is of paramount importance.
This makes it very hard to train competitive LLMs unless one has the best and latest hardware, which offers the most memory capacity and thus the ability to actually train these models in practice.
Our research targets exactly this setting, offering a mathematical connection between two methods at the cornerstone of memory-efficient training, and showing how this connection can lead to further improvements in memory-efficiency and distributed training. 

\section*{Acknowledgements}
We thank Shannon Zejiang Shen, Li Du, Aniruddha Nrusimha, Jeremy Bernstein, Jyothish Pari, Sami Jaghouar, Johannes Hagemann, and the anonymous reviewers for helpful discussions and feedback. This work was partially supported by the National Science Foundation under CAREER Award No. 2441872 and the MIT-IBM Watson AI Lab.

\bibliography{latex/custom}
\bibliographystyle{icml2025}

\newpage

\onecolumn
\appendix

\section{Proofs}
\label{sec:proofs}
\subsection{Proof of \cref{thm:grad-proj-is-adapter}}
\label{sec:proof-grad-proj-is-adapter}

\gradProjIsAdapter*
\begin{proof}
To show that the two optimization trajectories are equivalent, we will use induction to show that after every optimizer step $t \ge 0$ we have that
the optimizer states are equivalent, i.e., 
$\optimState{\Lambda}{t} = \optimState{\linearMap{S}\Theta}{t}$,
which in turn allows us to show that the networks are identical, i.e.,
$\Theta^{(t)} = \Theta^{(0)} + \linearMap{S}^{\adj}\Lambda^{(t)}$.

Note that at initialization, since $\Lambda^{(0)} = \vzero$, we have that
\begin{align*}
    \Theta^{(0)} + \linearMap{S}^\adj \Lambda^{(0)} = \Theta^{(0)} + \linearMap{S}^\adj \vzero = \Theta^{(0)},
\end{align*}
which implies that our reparameterized network is identical to our original network.
By assumption we also have that the optimizer states are equal
$\optimState{\Lambda}{0} = \optimState{\linearMap{S}\Theta}{0}$. Now assume that this is true for $t \le k$, i.e., for all $t \le k$
\begin{align}
    \Theta^{(0)} + \linearMap{S}^\adj \Lambda^{(t)} &= \Theta^{(t)} & \text{(Neural networks equivalent)}
    \label{eq:inductive-assumption-param} \\
    \optimState{\Lambda}{t} &= \optimState{\linearMap{S}\Theta}{t}. &\text{(Optimizer states equivalent)}
    \label{eq:inductive-assumption-optim-states}
\end{align}

Now note that for $k+1$,
\begin{align}
\Theta^{(0)} + \linearMap{S}^\adj \Lambda^{(k+1)} 
= \Theta^{(0)} + \linearMap{S}^\adj (\Lambda^{(k)} + \update{\Lambda}{k})  =  \Theta^{(k)} + \linearMap{S}^\adj \update{\Lambda}{k} 
\label{eq:parameter-unroll-partial}
\end{align}
where we used \cref{eq:reparameterized-optim-definition} in the first equality, and \cref{eq:inductive-assumption-param} for the second equality.
Also by \cref{eq:inductive-assumption-param} and by the chain rule, we have that the gradients of the loss function at timestep $t$ to the (effective) parameters of the networks are the same, i.e.,
$
    \cot{\Theta^{(0)} + \linearMap{S}^\adj \Lambda^{(k)}} = \cot{\Theta^{(k)}}.
$
In particular, this means that
\begin{align}
\cot{\Lambda^{(k)}} = \linearMap{S} \cot{\linearMap{S}^\adj \Lambda^{(k)}} 
= \linearMap{S} \cot{\Theta^{(0)} + \linearMap{S}^\adj \Lambda^{(k)}} 
= \linearMap{S} \cot{\Theta^{(k)}}
\label{eq:transformed-gradient-equivalence}
\end{align}
where the first two equalities follow from the chain rule.
Specifically, the first equality follows from writing the derivative of the loss w.r.t.\ the adapter parameter $\Lambda^{(k)}$ as a function of the derivative of the loss w.r.t.\ the additive component of the adapter, $\linearMap{S}^\adj \Lambda^{(k)}$; the second equality follows from recognizing that the latter is equal to the derivative of the loss w.r.t.\ the effective value $\Theta^{(0)} + \linearMap{S}^\adj \Lambda^{(k)}$ of the (adapted) parameter.
In turn, expanding $\update{\Lambda}{k}$,
\begin{align}
   (\update{\Lambda}{k}, \optimState{\Lambda}{k + 1}) = \optimizer( \cot{\Lambda}^{(k)}, \optimState{\Lambda}{k})  
   = \optimizer(\linearMap{S} \cot{\Theta^{(k)}}, \optimState{\linearMap{S}\Theta}{k}) 
   = (\update{\linearMap{S}\Theta}{k}, \optimState{\linearMap{S}\Theta}{k + 1}),
   \label{eq:optimizer-output-equivalence-next-step}
\end{align}
where the first equality is the optimizer we use to train the reparameterized model in \cref{eq:reparameterized-optim-definition},
in the second equality we use \cref{eq:transformed-gradient-equivalence} and \cref{eq:inductive-assumption-optim-states}. But by \cref{eq:parameter-unroll-partial},
\begin{align}
\Theta^{(0)} + \linearMap{S}^\adj \Lambda^{(k+1)} 
    = \Theta^{(k)} + \linearMap{S}^{\adj}\Delta^{(k)}_{\Lambda}    = \Theta^{(k)} + \linearMap{S}^{\adj}\Delta^{(k)}_{\linearMap{S}\Theta}  = \Theta^{(k+1)}.
   \label{eq:parameter-equivalence-next-step}
\end{align}
\Cref{eq:optimizer-output-equivalence-next-step} proves optimizer states are equivalent for optimizer step $k+1$, whereas \cref{eq:parameter-equivalence-next-step} establishes the networks are equivalent for optimizer step $k+1$, thus completing the proof.
\end{proof}

\subsection{Proof of \cref{thm:kron-factored-proj-is-mora}}
\label{sec:proof-kron-factored-proj-is-mora}

\kronFacProjIsMora*
\begin{proof}
From \cref{thm:grad-proj-is-adapter}, we know that training a linear layers using the gradient transformation $\linearMap{S} = \mR^\adj \otimes \mL$ corresponds to using the reparameterization:
\begin{align*}
\Theta &= \Theta^{(0)} + (\mR^\adj \otimes \mL)^\adj \Lambda
\end{align*}
and training $\Lambda$ instead, using the same optimizer. Letting $\vec\mA = \Lambda$, we then have
\begin{align*}
\vec\mW &= \vec{\mW^{(0)}} + (\mR \otimes \mL^\adj) \vec{\mA} \\
&= \vec{\mW^{(0)}} + \vec{ \mL^\adj \mA \mR^\adj}
\end{align*}
where in the first equation we used the fact that $(\mM \kron \mN)^\adj = \mM^\adj \kron \mN^\adj$ and in the second equation we used $\vec{\mM \mN \mO} = (\mO^\adj \kron \mM) \vec{\mN}$.
Taking $\vecinv{\cdot}$ in both sides completes the proof.
\end{proof}

\subsection{Proof of \cref{thm:galore-is-lora}}
\label{sec:proof-galore-is-lora}

We now state a more general version of \cref{thm:galore-is-lora}, and then prove it.

\begin{corollary}[Galore is one-sided LoRA (General)]
Let $\mW^{} \in \R^{m \times n}$ be the parameter matrix of a linear layer with corresponding gradient matrix $\cot{\mW} \in \R^{m \times n}$.
Consider training $\mW$ with $\optimizer$ using GaLore, i.e., where we linearly transform the gradient matrix with a matrix $\mP$,
\begin{align*}
    \widetilde{\cot{\mW}} = \begin{cases}
        \Matrix{P} \cot{\mW} & m \le n \,\,\,\,\,\, \text{(i.e., apply from the left)}\\
        \cot{\mW} \mP & m > n \,\,\,\,\,\, \text{(i.e., apply from the right)}\\
    \end{cases}
\end{align*}
and then apply our optimizer on it, before transforming our update back to parameter space via $\mP^\adj$, viz.,
\begin{align*}
   (\update{\mW}{t}, \optimState{\mW}{t+1}) &= \optimizer(\vec{\widetilde{\cot{\mW}}^{(t)}}, \optimState{\mW}{t}) \\
   \mW^{(t+1)} = &\begin{cases}
   \mW^{(t)} + \mP^{\adj}\vecinv{\update{\mW}{t}} & m \le n \\
   \mW^{(t)} + \vecinv{\update{\mW}{t}}\mP^{\adj} & m > n \\
   \end{cases}
\end{align*}
where $\mP$ is an arbitrary matrix of size $\R^{d \times m}$ (if $m \le n$) or $\R^{n \times d}$ (otherwise) and $d \le \min(m, n)$ controls the dimensionality of the transformation.
Then the optimizer trajectory of this network is equivalent to a network trained with the reparameterization:
\begin{align*}
\mW = \begin{cases}
\mW^{(0)} + \mP^\adj \mA & m \le n \\
\mW^{(0)} + \mA \mP^\adj & m > n,
\end{cases}
\end{align*}
i.e., adding LoRA adapters where one side is frozen to $\mP^\adj$ and only the other side, $\mA$, is learned.
\end{corollary}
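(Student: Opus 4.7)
The plan is to derive the corollary directly as an instantiation of \cref{thm:kron-factored-proj-is-mora}, by specializing one of the two Kronecker factors of $\linearMap{S}$ to an identity matrix. Since \cref{thm:kron-factored-proj-is-mora} already handles the general two-sided Kronecker-factored case $\linearMap{S} = \mR^\adj \kron \mL$ with gradient transformation $\mL \cot{\mW} \mR$ and reparameterization $\mW = \mW^{(0)} + \mL^\adj \mA \mR^\adj$, I only need to exhibit the right $\mL,\mR$ for each of the two sub-cases of the corollary, and check that the ``one-sided'' GaLore update coincides with the two-sided Kronecker one under that choice.

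For the case $m \le n$, GaLore applies $\mP \in \R^{d \times m}$ from the left. I would set $\mL = \mP$ and $\mR = \mI_n$, so $\linearMap{S} = \mI_n \kron \mP$ and the transformed gradient becomes $\mL \cot{\mW} \mR = \mP \cot{\mW}$, matching GaLore exactly. \cref{thm:kron-factored-proj-is-mora} then immediately yields the reparameterization $\mW = \mW^{(0)} + \mP^\adj \mA \mI_n = \mW^{(0)} + \mP^\adj \mA$ with $\mA \in \R^{d \times n}$, which is precisely the one-sided LoRA with the left factor frozen. The symmetric case $m > n$ is handled by taking $\mL = \mI_m$ and $\mR = \mP \in \R^{n \times d}$, so that $\linearMap{S} = \mP^\adj \kron \mI_m$ and $\mL \cot{\mW} \mR = \cot{\mW} \mP$, yielding the reparameterization $\mW = \mW^{(0)} + \mI_m \mA \mP^\adj = \mW^{(0)} + \mA \mP^\adj$ with $\mA \in \R^{m \times d}$.

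The only bookkeeping step is to verify that the update convention in the corollary's statement (``transform back via $\mP^\adj$'' applied on a single side) agrees with the vectorized update $\mS^\adj \update{\linearMap{S}\Theta}{t}$ used in the general theorem. This reduces to the same $\vec{\cdot}$/Kronecker identity $(\mM^\adj \kron \mN)\vec{\mX} = \vec{\mN \mX \mM}$ already used in the proof of \cref{thm:kron-factored-proj-is-mora}: for $m \le n$, $(\mI_n \kron \mP)^\adj \vec{\update{}{}} = (\mI_n \kron \mP^\adj)\vec{\update{}{}} = \vec{\mP^\adj \vecinv{\update{}{}}}$, and analogously on the right for $m > n$. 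This confirms that $\vecinv{\mS^\adj \update{\linearMap{S}\Theta}{t}}$ equals $\mP^\adj \vecinv{\update{\mW}{t}}$ or $\vecinv{\update{\mW}{t}}\mP^\adj$, as stated.

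There is no real obstacle here: the result is genuinely a corollary. The main thing to be careful about is matching shapes and making sure the ``which side'' convention (left versus right application of $\mP$) is chosen consistently with the $\mL,\mR$ assignment, and that the vec identities are applied in the correct orientation. Once those are lined up, the proof is a two-line specialization of \cref{thm:kron-factored-proj-is-mora}, with the remark that setting $\mR = \mI$ (respectively $\mL = \mI$) never affects memory in any adverse way since identity factors need not be stored.
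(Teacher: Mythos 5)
Your proposal is correct and takes essentially the same route as the paper: the paper's proof also identifies $\linearMap{S} = \mI_n \kron \mP$ (resp.\ $\mP^\adj \kron \mI_m$), verifies via the identity $\vec{\mM\mN\mO} = (\mO^\adj \kron \mM)\vec{\mN}$ that the GaLore update matches the vectorized dynamics, and then notes this is the special case of \cref{thm:kron-factored-proj-is-mora} with $\mL = \mP, \mR = \mI$ (or $\mL = \mI, \mR = \mP$). The only cosmetic difference is that the paper invokes \cref{thm:grad-proj-is-adapter} explicitly before observing the Kronecker specialization, whereas you go through \cref{thm:kron-factored-proj-is-mora} directly; the content is identical.
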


\begin{proof}
Define 
\begin{align*}
\linearMap{S} = \begin{cases}
    \mI_n \kron \mP & m \le n \\
    \mP^\adj \kron \mI_m & m > n \\
\end{cases}
\end{align*}
where $\mI_m$ is the $m \times m$ identity matrix, and similarly for $\mI_n$.
Then note that 
\begin{align*}
\vec{\widetilde{\cot{\mW}}} &= \begin{cases}
    \vec{\Matrix{P} \cot{\mW}} & m \le n \\
    \vec{\cot{\mW} \mP} & m > n \\
\end{cases} \\
&= \linearMap{S} \vec{\cot{\mW}}
\end{align*}
using $\vec{\mM \mN \mO} = (\mO^\adj \kron \mM) \vec{\mN}$ as before.
Similarly, we have that $\vec{\mW^{(t+1)}} = \vec{\mW^{(t)}} + \linearMap{S}^\adj \update{\mW}{t}$. Hence, by \cref{thm:grad-proj-is-adapter}, we have that training a network with GaLore is equivalent to introducing a parameter $\mA$ and optimizing using the reparameterization $\vec{\mW} = \vec{\mW^{(0)}} + \linearMap{S} \vec{\mA}$.
Observe that this choice of $\linearMap{S}$ is a special case of \cref{thm:kron-factored-proj-is-mora} where $\mL = \mP$ and $\mR = \mI$ (if $m \le n$) or $\mL = \mI$ and $\mR = \mP$ (if $m > n$).
Thus, the reparameterization corresponds to LoRA with one of the two adapter matrices frozen to $\mP$.
\end{proof}

\section{Weight Decay}
\label{app:weight-decay}

\Cref{thm:galore-is-lora} establishes an equivalence between GaLore and LoRA when stateful optimizers are in play (i.e., those satisfying \cref{eq:optim-definition}).
While Adam~\citep{adam} can be straightforwardly recast as a stateful optimizer, it turns out that weight decay, as is traditionally implemented in, e.g., AdamW~\citep{adamw}, breaks this symmetry as it does not fit our definition of a stateful optimizer.
From the perspective of our definition, the problem is that optimizer steps with AdamW are not solely a function of the observed gradients up until this point, but also the actual values of the parameters.
This is important, since automatic differentiation libraries traditionally distinguish \emph{trainable} and \emph{non-trainable} parameters, with weight decay being applied to the former.
Since the duality  in \cref{thm:galore-is-lora} changes what the trainable parameters are, this means that the weight decay is applied differently in the gradient transformation view and in the linear adapter view.

For example, consider taking a linear layer with weight $\Theta$, and training it with an optimizer that applies weight decay.
When training this layer with a linear gradient transformation $\mS$, after a single optimizer step, our new weight is given by
\begin{align}
   (\update{\Theta}{0}, \optimState{\Theta}{1}) &= \text{OptimizerWithoutWeightDecay}( \mS \cot{\Theta}^{(0)}, \optimState{\Theta}{0}) \\
   \Theta^{(1)} &= \Theta^{(0)} + \mS^\adj \update{\Theta}{0} - \lambda \Theta^{(0)}
\end{align}
where $\lambda \in \mathbb{R}^{+}$ is our weight decay penalty.
Similarly, following \cref{thm:grad-proj-is-adapter}, our linear layer's effective weight after a single optimizer step, in the adapter view, is given by 
\begin{align}
   \Theta_\text{effective}^{(1)} &= \Theta^{(0)} + \mS^\adj (\Lambda^{(0)} + \update{\Lambda}{0} - \lambda \Lambda^{(0)}) \\
   &= \Theta^{(0)} + \mS^\adj (\Lambda^{(0)} + \update{\Lambda}{0}) - \lambda \mS^\adj \Lambda^{(0)} \\
   &= \Theta^{(0)} + \mS^\adj \update{\Theta}{0} - \lambda \mS^\adj \Lambda^{(0)}
\end{align}
where the final equality follows from \cref{thm:grad-proj-is-adapter}.
Note that in general, we do not have that $\Theta^{(0)} = \mS^\adj \Lambda^{(0)}$, which means the optimizer trajectories may diverge.

An alternative interpretation for the above is that weight decay can be seen (roughly) as placing a Gaussian prior on the trainable parameters, but since the set of trainable parameters is different under each view, the equivalence does not immediately hold.
We note that it is not outright clear if one implementation of weight decay is superior, so further research is required in this regard.
In our experiments, for simplicity, we leave the application of weight decay untouched, i.e., we (implicitly) use the implementation of weight decay that naturally arises from the adapter or gradient transformation views.

\paragraph{Maintaining the equivalence.} If one truly cares about preserving the optimizer trajectory even when training with weight decay, practically all one has to do is adjust the application of the weight decay so that it reflects the behavior of weight decay in the gradient transformation view or in the linear adapter view.
Adjusting the linear adapter weight decay application to match the gradient transformation weight decay application is fairly simple: one just has to compute the effective weight at every timestep, as done above, and decay the frozen base weights directly.
The converse is possible but slightly trickier, since the application of weight decay in the linear adapter view requires one to know what $\Lambda^{(t)}$ is, which may require the introduction of additional optimizer state. For example, one could store $\Theta^{(0)}$ and solve\footnote{Note that since the right hand side lies in $\range(\mS^\adj)$, this linear system will have a solution.} $\mS^\adj \Lambda_\text{effective}^{(t)} = \Theta^{(t)} - \Theta^{(0)}$ on every optimizer state,\footnote{ In the distributed case, this might not incur any additional cost, since $\Theta^{(0)}$ would already need to be stored. But this would require solving a linear system on every iteration.} or one could store and continually update $\Lambda_\text{effective}$ as part of the optimizer state.

\section{Additional Results}
\label{sec:additional-results}

\begin{figure}[H]
    \centering
    \includegraphics[width=0.8\linewidth]{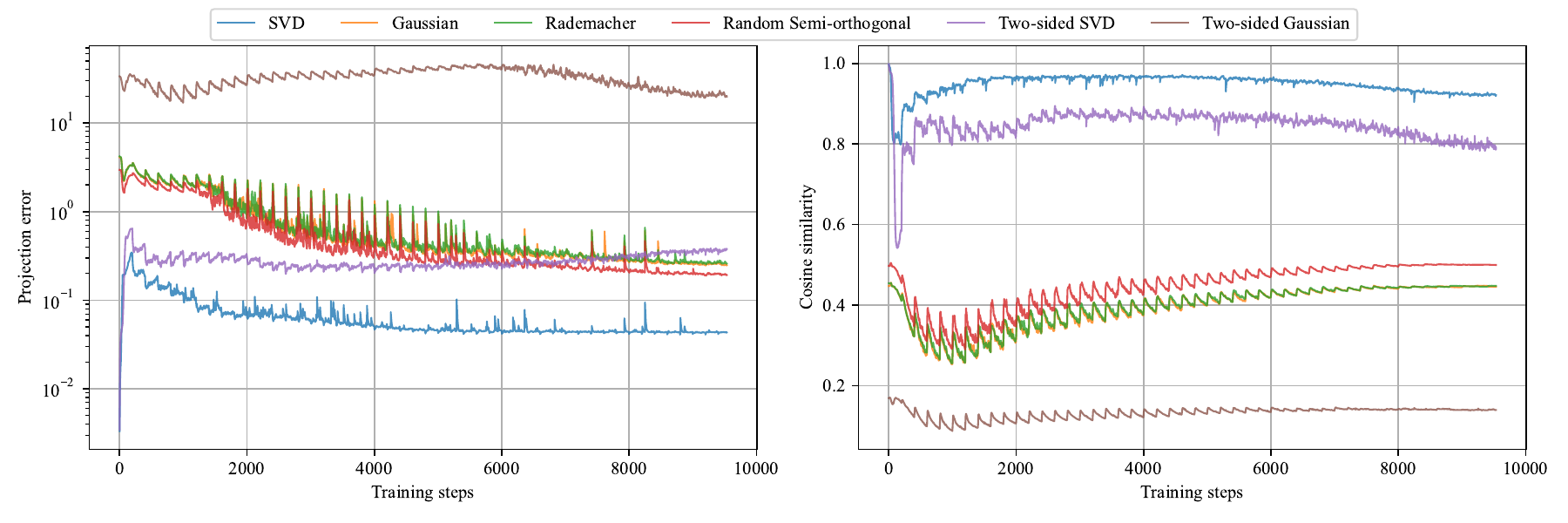}
    \vspace{-4mm}
    \caption{Full results for gradient reconstruction error (i.e., $\Vert \cot{\Theta} - \mS^\top \mS \cot{\Theta}\Vert^2$) (left) and cosine similarity  (i.e., $\cos(\cot{\Theta}, \mS^\top \mS \cot \Theta)$) of the various transformations across training steps with the \tinyB model. The projections with lowest reconstruction error (measured either by L2 error or cosine similarity with the unprojected stochastic gradient) do not give the best downstream performance (see \cref{tab:full-results}).}
    \label{fig:reconstruction_figure_full}
\end{figure}

\section{Experimental Setup}
\label{sec:architectural-details}

The details of the two architectures we consider are shown in \cref{tab:architecture-details}.
We also include a discussion on gradient accumulation.

\paragraph{Gradient accumulation.} The experimental setup in the original GaLore paper  did not perform gradient accumulation, which meant that the maximum sequence length had to be short enough (e.g., 256) such that a single batch could contain a large-enough number of sequences for accurate gradient estimation.  Our experiments are instead conducted in the  standard setting where we assume gradients  are accumulated across multiple microbatches.
In this case, the reparameterization of GaLore as LoRA has the additional benefit of straightforwardly allowing for gradient accumulation in the lower-dimensional space.  Concretely, the most straightforward implementation of GaLore\footnote{E.g., the official implementation in \url{https://github.com/jiaweizzhao/GaLore}.} will lead to gradient accumulation in the original parameter space, which would consume substantially more memory. In contrast, in the LoRA formulation the gradients are accumulated \emph{after} applying the gradient transformation, providing substantial memory savings without {any} additional code.\footnote{One can implement this optimization in the GaLore form, but this requires additional code. The issue is that most deep learning frameworks will compute the gradients of a parameterized function, and the user then separately passes these as input to an optimizer. If gradients are only transformed in the optimizer, then the automatic differentiation module cannot figure out that only the smaller transformed gradient is needed, and not the full gradient.
We suspect that a sufficiently good compiler should in principle recover this optimization if one ensures that entire training steps (viz., all the gradient accumulation steps and optimizer step) are compiled jointly.}

\begin{table}[H]
\small
\centering
\begin{tabular}{lll}
\toprule
                   & \tinyB & \largeB \\ \midrule
Layers             & 12   & 24   \\
Heads              & 16   & 16   \\
Embed.\ dim.        & 1024 & 2048 \\
Intermediate dim. & 2816 & 5472 \\
Head dim.          & 64   & 128   \\
Query groups       & 16    & 16   \\
Batch size         & 0.5M & 1M   \\ \midrule
Warmup tokens      & 0.5B & 1B   \\
Total tokens       & 5B   & 10B \\
\bottomrule
\end{tabular}
\vspace{-2mm}
\caption{Description of the two architectural settings we consider for our experiments: a \tinyB setting which we conduct most of our analyses and ablations on, and a \largeB setting which we use to evaluate our techniques in more realistic, large-scale setting.\vcram{-2mm} }
\label{tab:architecture-details}
\end{table}

\end{document}